\documentclass[12pt]{article}
\usepackage[utf8]{inputenc}
\usepackage{fullpage}

\hyphenation{leadingones}

\usepackage{amsmath, amsxtra, amsfonts, amssymb, amstext, mathtools}
\usepackage{amsthm}
\usepackage{nicefrac}
\newtheorem{theorem}{Theorem}
\newtheorem{lemma}{Lemma}
\usepackage{xspace}
\usepackage[noadjust]{cite}
\usepackage{url}\urlstyle{rm}
\usepackage{graphics}
\usepackage[usenames,dvipsnames]{xcolor}
\usepackage[colorlinks]{hyperref}
\definecolor{linkblue}{rgb}{0.1,0.1,0.8}
\hypersetup{colorlinks=true,linkcolor=linkblue,filecolor=linkblue,urlcolor=linkblue,citecolor=linkblue}
\usepackage[algo2e,ruled,vlined,linesnumbered]{algorithm2e}

\newcommand\myatop[2]{\genfrac{}{}{0pt}{}{#1}{#2}}

\newcommand{\R}{\mathbb{R}}

\renewcommand{\epsilon}{\varepsilon}
\newcommand{\eps}{\epsilon}

\DeclareMathOperator{\E}{E}

\DeclareMathOperator{\new}{new}

\newcommand{\Bold}{B_{\old}}
\DeclareMathOperator{\old}{old}
\DeclareMathOperator{\opt}{opt}
\newcommand{\xold}{x^{\old}}
\newcommand{\xopt}{x^{\opt}}
\newcommand{\fold}{f^{\old}}

\newcommand{\xoff}{x^{\gamma+1}}

\newcommand{\assign}{\leftarrow}

\newcommand{\oea}{$(1 + 1)$~EA\xspace}
\newcommand{\algo}{$(\gamma+1)$~REA\xspace}

\newcommand{\onemax}{\textsc{OneMax}\xspace}

\newcommand{\leadingones}{\textsc{LeadingOnes}\xspace}

\newcommand{\binval}{\textsc{BinaryValue}\xspace}

\begin{document}

\title{Fast Re-Optimization via Structural Diversity}

 \author{Benjamin Doerr$^{1}$,  
 Carola Doerr$^{2}$, 
 Frank Neumann$^{3}$}
 
 \date{
 $^1$\'Ecole Polytechnique, CNRS, Laboratoire d'Informatique, Palaiseau, France\\
 $^2$ Sorbonne Universit\'e, CNRS, Laboratoire d'Informatique de Paris 6, LIP6, 75005 Paris, France\\
 $^3$ The University of Adelaide, Australia
 \\[1cm]
 \today
 }

\maketitle

\begin{abstract}
  When a problem instance is perturbed by a small modification, one would hope to find a good solution for the new instance by building on a known good solution for the previous one. Via a rigorous mathematical analysis, we show that evolutionary algorithms, despite usually being robust problem solvers, can have unexpected difficulties to solve such re-optimization problems. When started with a random Hamming neighbor of the optimum, the (1+1) evolutionary algorithm takes $\Omega(n^2)$ time to optimize the LeadingOnes benchmark function, which is the same asymptotic optimization time when started in a randomly chosen solution. 
	There is hence no significant advantage from re-optimizing a structurally good solution. 
  
  We then propose a way to overcome such difficulties. As our mathematical analysis reveals, the reason for this undesired behavior is that during the optimization structurally good solutions can easily be replaced by structurally worse solutions of equal or better fitness. We propose a simple diversity mechanism that prevents this behavior, thereby reducing the re-optimization time for LeadingOnes to $O(\gamma\delta n)$, where $\gamma$ is the population size used by the diversity mechanism and $\delta \le \gamma$ the Hamming distance of the new optimum from the previous solution. We show similarly fast re-optimization times for the optimization of linear functions with changing constraints and for the minimum spanning tree problem. 
\end{abstract}

\sloppy{
\section{Introduction}

Evolutionary algorithms have been applied to many real-world problems in important areas such engineering~\cite{DBLP:books/daglib/0034477} and supply chain management~\cite{DBLP:books/sp/chiong12}. The underlying optimization problems arising in these real-world applications are usually not static, but have dynamic and stochastic components. Due to the ability to adapt to changing environments, evolutionary algorithms have been applied to various stochastic and dynamic problems~\cite{DCOPS,DBLP:journals/swevo/RakshitKD17}.
Furthermore, many optimization problems faced in practice occur repeatedly, with slight variations in the precise instance data. Instead of solving these instances from scratch, it is common practice to start the optimization in a solution that showed good quality for previously solved problems~\cite{DBLP:journals/algorithmica/SchieberSTT18,DBLP:conf/birthday/Zych-Pawlewicz18}.

Theoretical investigations regarding the behavior of evolutionary algorithms and other bio-inspired algorithms have been carried out for different types of dynamic problems (see \cite{DBLP:journals/corr/abs-1806-08547} for an overview). This includes the MAZE problem for which difference in terms of performs for simple evolutionary algorithms and ant colony optimization approaches have been pointed out. In the context of dynamic constraints, linear functions with dynamically changing linear constraints have been investigated~\cite{shi2017reoptimization}. These investigations have been extended experimentally to the knapsack problem with a dynamic constraint bound. In addition, a general study of a simple evolutionary multi-objective approach for general costs functions with dynamic constraints has been provided in \cite{DBLP:journals/corr/abs-1811-07806}, which analyses the approximation behaviour of the algorithm in terms of the submodularity ratio of the problem.
Other important studies included investigations on dynamic makespan scheduling~\cite{DBLP:conf/ijcai/NeumannW15}, dynamic shortest paths~\cite{DBLP:journals/tcs/LissovoiW15} and variants of the dynamic vertex cover problem~\cite{DBLP:conf/gecco/PourhassanGN15,DBLP:conf/gecco/ShiNW18}. A general method to analyze the runtime of evolutionary algorithms in dynamic contexts has been given in~\cite{Dang-NhuDDIN18}.

With this paper, we contribute to the theoretical understanding of evolutionary algorithms when dealing with re-optimization problems. 
As dynamic problems change over time, a previously high quality solution $\xold$ may become unsuitable after a dynamic change has happened. We assume that a user of the algorithm is aware of the fact that a change has occurred. This is in contrast to classical dynamic problems where often the algorithm has to deal with changes automatically during the run and has to adapt to the changed problems.
However, it should be noted that evolutionary algorithms for dynamic problems often incorporate a change detection mechanism~\cite{DBLP:series/isrl/RichterY13}.
After a change, the solution $\xold$ might still be structurally quite close to a solution that is of high quality after the dynamic change has occurred. This is especially the case if only a few components have changed. Previously examined approaches have indirectly build on this by using a multi-objective formulation of the given problem where the constraint is treated as an additional objective~\cite{shi2017reoptimization,DBLP:conf/ppsn/Roostapour0N18,DBLP:journals/corr/abs-1811-07806}.

We explore the use of a previously good solution in a more direct way by proposing a population-based approach that directly searches for improvements close to the previously best solution $\xold$.
In our studies, we consider problems where the dynamic change is quantified by a parameter $\delta$.  It is often desirable not to deviate from a previously chosen solution that much in terms of design parameters as such changes might be difficult to implement. Therefore, we search for solutions after a given change has occurred that are close to the solution in the decision space prior to the change. We present a simple evolutionary algorithm called \algo. It works with a diverse set of solutions at Hamming distance at most $\gamma$ from a previously good solution $\xold$, where $\gamma$ is a parameter of the algorithm. In order to have global search capabilities, it also keeps the best solution found for the considered problem at a time. The population of \algo contains for each $i$, $0 \leq i \leq \gamma$, the best-so-far solution at Hamming distance $i$ to $\xold$. With this diversity mechanism, we aim at putting a stronger emphasis on exploring the neighborhood of the previous best solution.

We show the effectiveness of our approach on a wide range of optimization problems by rigorous runtime analyses~\cite{auger2011theory,BookNeuWit,jansen2013perspective}. Our analyses use common rigorous techniques from this area of research to show the working principles of our proposed method.

We start by investigating the classical LeadingOnes problem and consider the scenario that the problem is perturbed by flipping $\delta$ bits of the target bit string. We show that a solution of fitness at least as high as the best possible solution within Hamming distance $i \leq \gamma+1$ to $\xold$ is computed in expected time less than or equal to $2e(\gamma+1)in$. For Hamming distances $i>\gamma+1$, we bound the expected time to find such a solution from above by $2en^2$. Furthermore, we show a lower bound of $\Omega(n^2)$ for computing an optimal solution at Hamming distance $\delta \in [\gamma+2, n]$, that is, when the optimal solution is (just a little) further away from the starting solution than Hamming distance $\gamma$. This lower bound also holds when re-optimizing with the classic \oea. These lower bounds show that it is indeed the proposed diversity mechanism that makes the difference between an easy re-optimization and a re-optimization that is not faster than optimizing from a random solution. 

We then investigate the effectiveness of our approach on a constraint optimization problem where the constraint bound changes. Investigating our algorithm on the class of linear functions with a uniform constraint, we show that it re-computes an optimal solution in expected time $O(\gamma \delta n)$, where $\delta$ is the amount by which the constraint bound changes.

Finally, we investigate the minimum spanning tree problem. This classical combinatorial optimization problem has been subject to a wide range of theoretical investigations in the area of runtime analysis of bio-inspired computing~\cite{Witt14mst,ReichelS09,DBLP:journals/tcs/NeumannW07,NeumannW06,RaidlKJ06tec}. We consider a dynamic version of the problem where either $\delta$ edges are added or removed from the current graph. Our results show that \algo is able to recover an optimal solution in time $O(\gamma \delta   n)$ in both situations.

The paper is structured as follows. We introduce the algorithm and setting for dynamic changes in Section~\ref{sec2}. In Section~\ref{sec3}, we present our results for re-optimizing the \leadingones problem. We analyze linear functions with a dynamic uniform constraint in Section~\ref{sec4}. We present the results for re-optimizing the minimum spanning tree problem in Section~\ref{sec5} and finish with some concluding remarks.

\section{The \texorpdfstring{$(\gamma+1)$}{(c+1)} Re-Optimization EA}
\label{sec2}

Our algorithm, the $(\gamma+1)$~Re-Optimization EA (REA), has as input a user-specified solution $\xold$. We typically assume that $\xold$ was a solution of high quality for the function $\fold$. 

We are concerned in this work with the situation in which the function $\fold$ is perturbed by some change, resulting in a new objective function $f$. We study the time needed to recover a solution of quality at least $\fold(\xold)$. That is, in the context of maximization problems, we study the number of function evaluations that are needed by \algo to generate a solution $y$ with $f(y)\ge \fold(\xold)$, and in the context of minimization problems we require a solution $y$ with $f(y)\le \fold(\xold)$. 

Note that in this work we study both maximization and minimization problems. Algorithm~\ref{alg:algo} summarizes the \algo for maximization problems; we will describe it below. For minimization problems, only three changes are necessary: the $f^i$ are initialized by $\infty$ in line~\ref{line:infty}, and the $\ge$-signs in lines~\ref{line:selection} and~\ref{line:update} need to be exchanged for a $\le$-sign.

We quantify the difference between the old function $\fold$ and the new function $f$ by a parameter $\delta$, which denotes the smallest distance at which a solution of quality at least $\fold(\xold)$ exists. That is, there exists a solution $y$ at Hamming distance $H(y,\xold)=\delta$ for which $f(y)\ge \fold(\xold)$ and for all solutions $y'$ with $H(y',\xold)<\delta$ it holds that $f(y')<\fold(\xold)$. In our applications we assume that an upper bound $\gamma \ge \delta$ of this perturbation is known to the user (and set $\gamma=n$ otherwise).  

\begin{algorithm2e}[t]%
\textbf{Input:} Solution $\xold$\;
\textbf{Initialization:}\\
\Indp
	$x^{0},x^{*} \assign \xold$\;
	\label{line:infty}\lFor{$i=1,2,\ldots,\gamma+1$}{$x^i \assign \text{undefined}$, $f^{i} \assign -\infty$}
 \Indm   
\textbf{Optimization:}
\For{$t=1,2,3,\ldots$}{
	Select parent $x$ 
by choosing $x^{*}$ with probability $1/2$ and uniformly at random from $\{x^i \mid i \in [0..\gamma+1]\} \setminus \{x^{*}\}$ otherwise\;
	Create $y$ from $x$ by flipping in each bit independently with probability $1/n$; // standard bit mutation\\
	\lIf{$f(y) \ge f(x^*)$}{\label{line:selection}$x^{*} \assign y$}
   $i \assign \min\{H(y,\xold),\gamma+1\}$\;
	\lIf{$f(y)\ge f^i$}{\label{line:update}$x^i \assign y$, $f^i \assign f(y)$}
}
\caption{The \algo for the re-optimization (here: maximization) of a function $f:\{0,1\}^n \to \R$, which emerged from the function $\fold$ by a dynamic change.}
\label{alg:algo}
\end{algorithm2e}

Our algorithm stores for each $i$, $i \in [\gamma]:=\{1,2,\ldots,\gamma\}$, one best-so-far solution $x^{i}$ of Hamming distance $i$ to $\xold$. For notational convenience we define $x^0:=\xold$. In order to advance the search beyond the radius of $\gamma$ (e.g., if we risk that the upper bound $\gamma$ is too small), the algorithm also stores an additional search point $\xoff$ which is the best-so-far solution of Hamming distance greater than $\gamma$ to $\xold$. The points $x^i$, $i \in [\gamma+1]$ are initialized as undefined, the best function value at distance $i$, $f^i$, as $-\infty$. 

In every iteration the \algo first selects a parent individual $x$ from which one offspring $y$ will be generated. The parent is chosen through a biased random selection. With probability $1/2$ we select as $x$ the search point $x^{*}$ with the best-so-far objective value. We choose $x$ uniformly at random from $\{x^i \mid i \in [0..\gamma+1]:=\{0\} \cup [\gamma]\}\setminus \{x^*\}$ otherwise. That is, each $x^i \neq x^*$ is selected with probability $1/(2(\gamma + 1))$. A new solution candidate $y$ is created from the selected parent $x$ by standard bit mutation with mutation rate $p=1/n$. If the Hamming distance $i=H(y,\xold)$ of $y$ to $\xold$ is at most $\gamma$ this offspring $y$ replaces the previous best individual $x^{i}$ at distance $i$ if it is at least as good, i.e., if and only if $f(y) \ge f(x^{i})$. For offspring $y$ with $H(y,\xold)>\gamma$, the selection is made between $y$ and $x^{\gamma+1}$, by applying the same selection rules as in the case $i \le \gamma$.

%
%

Note that, despite the name, the \algo maintains a population size of size $\gamma+2$. We use \algo for notational convenience.

The biased parent selection of the \algo is meant to avoid too severe slow-downs when the upper bound $\gamma$ of the perturbation is large. With uniform parent selection the slowdown caused by sub-optimal search points can be as large as proportional to the population size $\gamma+2$. With the biased selection, in contrast, each step simulates, with probability $1/2$, a regular \oea. 

The advantages of storing the points $x^i$, $i \in [0..\gamma+1]$ will be motivated in the next section, using the example of re-optimizing the \leadingones problem as illustration.

\section{Re-Optimizing \texorpdfstring{\leadingones}{LeadingOnes}}
\label{sec3}

%

As a first example to demonstrate the working principles of the \algo we regard the \leadingones problem, one of the most classical benchmark problems in the theory of evolutionary computation. It has the characteristic property that the decision variables can only be optimized sequentially, that is, only when the first $i$ variables are set to the optimal value the $(i+1)$-st variable has an influence on the fitness. Such behaviors are common in non-artificial problems, see, e.g., the examples in~\cite[Section~4]{DoerrHK11} or~\cite[Section~3.2]{DoerrHK12} for two different shortest path problems.

For a ``target string'' $z \in \{0,1\}^n$ and a permutation $\sigma$ of the index set $[n]$, the \leadingones function $f_{z,\sigma}$ is defined via 
\[f_{z,\sigma}(x):=\max \{ j \in [0..n] \mid \forall k \in [j]: x_{\sigma(k)}=z_{\sigma(k)}\}\] 
for all $x \in \{0,1\}^n$. Our aim is maximizing the \leadingones functions. We note that $z$ is the unique global maximum of $f_{z,\sigma}$, regardless of $\sigma$. This problem is referred to as \leadingones because traditionally only the non-permuted instance $f_{(1,\ldots,1),\text{id}}$ with target string $(1,\ldots,1)$ was studied. This function simply returns the number of initial (\emph{leading}) ones of each solution candidate. Many EAs, and including our \algo, show exactly the same performance on any of the instances $f_{z,\sigma}$ and it thus suffices to study this particular instance with $z=(1,\ldots,1)$ and $\sigma$ being the identity function~$\text{id}$. 

When perturbing the \leadingones function, small changes can result in large changes in fitness: if we assume that $\xold$ is an optimal solution for $f_{z,\sigma}$, i.e., $\xold=z$, then changing $z$ to $z'$ by flipping the $i$-th bit of $z$ gives a new fitness value $f_{z',\sigma}(\xold)=i-1$. We also note that the new optimal solution, which is $z'$, is at Hamming distance one of $\xold$. However, all the solutions which differ from $\xold$ only in positions of index greater than $i$ have the same fitness value $i-1$. In consequence, the \oea performs a random walk on this plateau until it eventually flips the $i$-th bit. When $i$ is small, it is likely that at this point the \oea has lost track of the previously good entries in the positions $j>i$, so that it then has to recover significant parts of the tail of $z$. This unfavorable behavior of the \oea motivates our decision to store for each Hamming distance $i \in [0..\gamma]$ a best-so-far solution $x^i$, and to assign positive probability of selecting $x^i$ as parent individual even if $f(x^i)$ is strictly smaller than the current-best fitness $f(x^*)$. In the situation described above, in which only the $i$-th bit has been flipped, the \algo always has a chance of at least $1/(2(\gamma+1))$ of selecting $\xold$ as parent individual. Conditioning on $\xold$ being the selected parent, the probability to sample as offspring the new optimal solution $z'$ is  at least $1/(en)$, since exactly the $i$-th bit needs to be flipped. The expected optimization time of the \algo is hence at most $2e(\gamma+1)n$, whereas the expected re-optimization time of algorithms not using the diversity mechanism can be considerably larger, cf. Lemma~\ref{lem:oea}. 

\textbf{Summary of our results for \leadingones.} In the remainder of this section we formalize the observations made above. In Section~\ref{sec:LO-upper} we prove an upper bound for the expected re-optimization time of the \algo on \leadingones, which in particular shows that the \algo finds the best solution that is in distance $i \le \gamma+1$ from $\xold$ in time $O(\gamma i n)$ only. We complement this results by a lower bound for the performance of the \oea (Lemma~\ref{lem:oea} in Section~\ref{sec:LO-lower}), which shows that for this algorithm the re-optimization times are $\Omega(n^2)$ when the fitness of $\xold$ is at most $n/2$, even when the Hamming distance of $\xold$ and $\xopt$ is small. In particular, when started with a random Hamming neighbor of the optimum, the \oea still needs $\Omega(n^2)$ iterations to find the optimum. 

These bounds show that the \algo is significantly faster in solving re-optimization problems of the \leadingones type. We also provide a lower bound for the \algo (Theorem~\ref{thm:LOlower}) which shows that $H(\xold,\xopt) \le \gamma+1$ is a necessary condition for a fast re-optimization: Already from $H(\xold,\xopt) = \gamma+2$ on the \algo can have an at least quadratic expected re-optimization time. Upper and lower bounds thus illustrate the trade-off between choosing a too large $\gamma$, which results in a slow-down that is linear in $\gamma$, and a too small $\gamma$, which results in an at least quadratic re-optimization time. 
 On the other hand, our results also prove that the \algo with a too small $\gamma$ still has an expected optimization time of at most $2en^2$, which is not much worse than the known $\tfrac 12 e n^2$ upper bound for the \oea. 

\subsection{Upper Bound for LeadingOnes}
\label{sec:LO-upper} 

Theorem~\ref{thm:LOupper} provides an upper bound for the re-optimization time of the \algo on the \leadingones problem in which the target string has been modified from $\xold$ to $\xopt$. Both situations of an accurate and a too small upper bound $\gamma$ on the perturbation $\delta = H(\xold,\xopt)$ are covered by this bound. More precisely, the theorem shows that regardless of $\gamma$ and $\delta$ the \algo has an expected re-optimization time that is at most quadratic. When $\gamma \ge \delta-1$ the expected re-optimization time is $O(\gamma n)$. Since for this problem it provides no additional difficulties, we not only compute the expected runtimes, but we follow the approach suggested in~\cite{Doerr18evocop} and first show a domination statement and then derive from that the expected runtime and a tail bound.

\begin{theorem}\label{thm:LOupper}
  Let $f$ be a generalized \leadingones function with unique optimum $\xopt$. Let $\xold \in \{0,1\}^n$. For all $i \in [0..H(\xold,\xopt)]$, let $T_i$ be the the number of function evaluations that the \algo needs to find a solution $y$ with $f(y) \ge \max\{f(y') \mid y' \in \{0,1\}^n, H(y',\xold) \le i\}$. 
  \begin{enumerate}
  \item If $i \le \gamma+1$, then $T_i$ is dominated by a sum of $i$ independent geometric distributions with success rate $\frac{1}{2e (\gamma+1) n}$. Consequently, 
  \begin{align*}
  &E[T_i] \le 2e(\gamma+1)i n =: \mu^+,\\ 
  &\Pr[T_i \ge (1+\eps) \mu^+] \le \exp\left(-\frac{\eps^2 i}{2(1+\eps)}\right) \mbox{ for all $\eps \ge 0$}.
  \end{align*}
  \item Regardless of $i$, the time $T_i$ is dominated by a sum of $n$ independent geometric random variables with success rate $\frac 1 {2en}$. Consequently,
  \begin{align*}
  &E[T_i] \le 2en^2 =: \mu^+,\\
  &\Pr[T_i \ge (1+\eps) \mu^+] \le \exp\left(-\frac{\eps^2 n}{2(1+\eps)}\right) \mbox{ for all $\eps \ge 0$}.
  \end{align*}
  \end{enumerate}
When $\gamma \ge \delta-1$, the expected re-optimization time of the \algo on the modified \leadingones function is thus at most $\min\{2e(\gamma+1)\delta n, 2en^2\}$, provided that $\xold$ was an optimal solution for $\fold$. 
\end{theorem}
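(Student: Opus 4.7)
The plan is to exploit the prefix-matching structure of \leadingones via a fitness-level argument along a chain of Hamming neighbors. Let $z'$ be the new target and $p_1 < \ldots < p_\delta$ the positions in which $z'$ differs from $z = \xold$. For $j \in [0..\delta]$ let $y^*_j$ be the string obtained from $z$ by flipping exactly the bits $p_1, \ldots, p_j$. A direct combinatorial check (to match $z'$ in every position $k \le p_{j+1}-1$ one must flip precisely the bits in $\{p_1, \ldots, p_j\}$, leaving no flip budget for anything else) shows that $y^*_j$ has fitness $p_{j+1}-1$, with the convention $p_{\delta+1}-1 := n$, and is the \emph{unique} maximizer of $f$ both over the Hamming ball of radius $j$ around $\xold$ and over the Hamming sphere of radius exactly $j$. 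Consecutive members of the chain $y^*_0, y^*_1, \ldots, y^*_i$ therefore differ in a single bit, namely $p_j$.

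For part~1, with $i \le \gamma+1$, let $\tau_j$ denote the first iteration at which $f^j \ge f(y^*_j)$; by the uniqueness on sphere $j$ this is the same as the first iteration at which $x^j = y^*_j$, and $\tau_0 = 0$ since $x^0 = \xold$ by initialization. The key invariant is that once $\tau_{j-1}$ is reached, $x^{j-1}$ remains equal to $y^*_{j-1}$ for the rest of the run: any replacement of $x^{j-1}$ must come from an offspring at Hamming distance exactly $j-1$ with fitness at least $p_j-1$, and by uniqueness $y^*_{j-1}$ is the only such string. Hence for every step $t > \tau_{j-1}$, the one-step probability that $x^j$ becomes $y^*_j$ is at least the probability of selecting $x^{j-1}$ as parent, which is at least $1/(2(\gamma+1))$ by direct inspection of the selection rule, times the probability of flipping exactly the single bit $p_j$, which is at least $(1/n)(1-1/n)^{n-1} \ge 1/(en)$. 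This product is $\ge 1/(2e(\gamma+1)n)$, so $\tau_j - \tau_{j-1}$ is stochastically dominated by a geometric variable of that success rate, and $T_i \le \tau_i$ is dominated by a sum of $i$ independent such geometrics. The expectation bound then follows by linearity and the tail bound from the standard Chernoff-style concentration for sums of independent geometric random variables.

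For part~2, I would run a textbook fitness-level analysis on the incumbent $x^*$, whose fitness is monotonically non-decreasing and takes at most $n+1$ distinct values. In each iteration, with probability $1/2$ the algorithm selects $x^*$ as parent; conditional on this, flipping only the first bit at which $x^*$ currently disagrees with $z'$ and no other bit (probability $\ge (1/n)(1-1/n)^{n-1} \ge 1/(en)$) strictly increases $f(x^*)$. Thus the waiting time of each of the at most $n$ fitness improvements is dominated by a geometric variable with success rate $1/(2en)$, which yields the stated domination, expectation, and tail bound for $T_i$. Combining the two parts in the case $\gamma \ge \delta-1$ with $i = \delta$ gives the final $\min\{2e(\gamma+1)\delta n,\; 2en^2\}$ bound: either $x^\delta$ reaches $y^*_\delta = z'$ (the new global optimum) via part~1, or $f(x^*) = n$ is attained via part~2, and either event suffices to re-establish fitness $\fold(\xold)$.

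The main delicate point of the argument is the uniqueness statement for $y^*_{j-1}$ on the Hamming sphere of radius exactly $j-1$: without it, the $\ge$-style acceptance rule could silently replace $x^{j-1}$ by another equally fit string that is no longer one bit-flip away from $y^*_j$, and the one-step success probability $1/(2e(\gamma+1)n)$ would break down. Once this uniqueness is in hand, the rest of the proof is a clean layered fitness-level argument combined with an off-the-shelf concentration inequality.
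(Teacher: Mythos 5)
Your proof is correct and follows essentially the same route as the paper's: both identify the unique fitness-maximal search point at each Hamming distance $j \le \gamma+1$ from $\xold$, note that consecutive such points differ in exactly one bit, and chain the waiting times (each dominated by a geometric random variable with success rate $\frac{1}{2e(\gamma+1)n}$, coming from the parent-selection probability $\frac{1}{2(\gamma+1)}$ times the single-bit-flip probability $\frac{1}{en}$) via a fitness-level/domination argument, while the second part is the standard fitness-level analysis of the incumbent $x^*$. The uniqueness-on-the-sphere invariant you single out as the delicate point is precisely what makes the paper's state definitions well-founded, so nothing is missing.
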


\begin{proof} 
  By the symmetry of all operators used in the \algo, we can assume without loss of generality that the optimum of $f$ is $\xopt  = (1,\dots,1)$.
  Let $0 \le i \le H(\xold,\xopt)$. We first consider the case that $i \le \gamma+1$. Due to the nature of the \leadingones function, there is a unique search point $x^{i,*}$ in $\{y \in \{0,1\}^n \mid H(y,\xold) = i\}$ with maximal fitness. This search point is equal to $\xold$ in all bit positions except the first $i$ positions in which $\xold$ is zero. Hence $H(x^{i,*},\xold) = i$. If $i \le \gamma$, then let  $T^*_i$ be the iteration in which the program variable $x^i$ takes the value $x^{i,*}$. For $i = \gamma+1$, let $T^*_i = T_i$. Note that $T_i^*$ stochastically dominates $T_i$ for all $i \le \gamma+1$, so it suffices to show our claim for $T_i^*$ instead of $T_i$. 
  
  We use a fitness level argument to estimate $T^*_i$. If $i \le \gamma$, then for all $j \le i$, we say that the algorithm is in state $j$ if $x^j = x^{j,*}$ and, if $j < i$, also $x^{j+1} \neq x^{j+1,*}$ holds. For $i = \gamma+1$, we say that the algorithm is in state 
	\begin{itemize}
		\item $j < \gamma$ if $x^j = x^{j,*}$ and $x^{j+1} \neq x^{j+1,*}$, 
		\item $j=\gamma$, when $x^j = x^{j,*}$ and $f(x^*) < \max\{f(y') \mid y' \in \{0,1\}^n, H(y',\xold) \le \gamma+1\}$, 
		\item $j=\gamma+1$ if $f(x^*) = \max\{f(y') \mid y' \in \{0,1\}^n, H(y',\xold) \le \gamma+1\}$.
	\end{itemize}
  
  In both cases $i \le \gamma$ and $i = \gamma+1$, we see that when the algorithm is in state $j < i$, then with probability at least $\frac{1}{2(\gamma+1)}$ it chooses $x^j$ as parent and (in this case) with probability $\frac 1n (1-\frac 1n)^{n-1} \ge \frac 1 {en}$, 
  flips exactly the unique bit that $x^{j,*}$ and $x^{j+1,*}$ differ in. Hence each iteration in state $j$ has a probability of at least $\frac{1}{2(\gamma+1)en}$ of ending in a higher state. By the classic fitness level theorem~\cite{Wegener01} we obtain the bound $E[T^*_i] \le 2e(\gamma+1)i n$. By the domination version of the fitness level theorem~\cite[Theorem~2]{Doerr18evocop} we also obtain the domination result and the tail bound, for the latter using a Chernoff bound for sums of independent geometric random variables~\cite[Theorem~3(i)]{Doerr18evocop}.

When $i$ is arbitrary, and not necessarily at most $\gamma+1$, we can still apply the classic fitness level method to the fitness of the best solution $x^*$. Note that when $x^*$ has some fitness $j$, then with probability at least $\frac 12$ this $x^*$ is chosen as parent and with probability $\frac 1n (1-\frac 1n)^{n-1} \ge \frac 1 {en}$ exactly the $(j+1)$-st bit is flipped. The resulting offspring $y$ has a fitness greater than $x^*$ and thus replaces $x^*$ (and possibly some $x^k$). Hence in each iteration, with probability at least $\frac 1 {2en}$ the fitness of $x^*$ increases. Again, the fitness level theorems give the claimed bounds for the time $T$ to find the optimum. Since $T$ dominates all $T_i$, the claims for $T_i$ are proven.
\end{proof}

\subsection{Lower Bound for LeadingOnes}
\label{sec:LO-lower}

We now show that the \oea without diversity mechanism can have a quadratic runtime to optimize \leadingones even when started with a Hamming neighbor of the optimum. In fact, many Hamming neighbors lead to this runtime, so this result also holds when starting with a random Hamming neighbor. Using similar arguments, we also show that the requirement $i \le \gamma+1$ in the first part of Theorem~\ref{thm:LOupper} cannot be relaxed. Already for a Hamming distance of $\gamma+2$, we have an expected quadratic optimization time. 

The reason for these high runtimes is as follows. For the \oea, the initial structurally good solution is easily replaced by other solutions of same fitness, which are structurally further away from the optimum. By this the advantage of starting with a good solution is lost. When the optimum has Hamming distance at least $\gamma+2$ from $\xold$, then (i) the \algo finds it hard to generate the optimum from one of the $x^k$, $k \in [0..\gamma]$, as these have a Hamming distance at least $2$ from the optimum, and (ii) the \algo also finds it hard to find the optimum via optimizing $x^*$ as this search point again quickly becomes structurally distant from the optimum. 

The rough reason for structurally good solutions moving away from the optimum is that bits with higher index than the current fitness plus one are neutral, that is, they are subject to mutation, but have no influence on the fitness and can therefore not bias the selection. For such bits, independent of their initial values in $\xold$, the probabilities of being zero or one converge to $1/2$. This was first shown in~\cite[proof of Theorem~10]{DoerrGHN07} and later exploited in several analyses how evolutionary algorithms optimize the \leadingones function~\cite{LassigS13,DoerrSW13foga,RoweS14,Sudholt18,BottcherDN10}. 

For the sake of completeness, we quickly repeat the statement in~\cite{DoerrGHN07} and its proof (where we note that in~\cite{DoerrGHN07} apparently the binomial coefficients were forgotten in the proof). We note that an essentially identical result (their assumption $t \ge n\ln(n)$ can be freely omitted) was independently proven in~\cite{LassigS13} with identical arguments).

\begin{lemma}\label{lem:neutral}
  Let $X_0, X_1, \dots$ be a sequence of binary random variables such that $\Pr[X_t = X_{t-1}] = 1 - \frac 1n$ and $\Pr[X_t = 1 - X_{t-1}] = \frac 1n$ independently for all $t \ge 1$. Then 
  \begin{align*}
  \Pr[X_t = X_0] = \tfrac 12 + \tfrac 12 (1 - \tfrac 2n)^t,\\
  \Pr[X_t \neq X_0] = \tfrac 12 - \tfrac 12 (1 - \tfrac 2n)^t.
  \end{align*}
\end{lemma}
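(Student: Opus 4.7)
The plan is to reduce the claim to a one-step affine recurrence and then solve it. Set $p_t := \Pr[X_t = X_0]$. Conditioning on $X_{t-1}$ and using the law of total probability, together with the observation that the event $\{X_t = X_0\}$ given $X_{t-1}$ depends only on whether a flip occurs at step $t$, one obtains
\[
p_t \;=\; \Pr[X_{t-1} = X_0]\left(1 - \tfrac 1n\right) + \Pr[X_{t-1} \neq X_0]\cdot \tfrac 1n \;=\; \tfrac 1n + \left(1 - \tfrac 2n\right) p_{t-1},
\]
with initial condition $p_0 = 1$. The fixed point of this recurrence is $1/2$, so substituting $q_t := p_t - 1/2$ yields the homogeneous relation $q_t = (1 - 2/n)\, q_{t-1}$ with $q_0 = 1/2$, and hence $q_t = \tfrac 12 (1 - 2/n)^t$. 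Substituting back gives the first identity, and the second follows by complementation.

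As an alternative (which, going by the author's parenthetical remark about missing binomial coefficients in \cite{DoerrGHN07}, appears to be the intended route), one can argue directly through the binomial distribution. The event $\{X_t = X_0\}$ coincides with the event that an even number of flips has occurred in steps $1, \ldots, t$, i.e. that $B \sim \mathrm{Bin}(t, 1/n)$ takes an even value. Averaging the binomial theorem $\sum_k \binom{t}{k} p^k(1-p)^{t-k} x^k$ at $x = \pm 1$ gives the standard identity
\[
\Pr[B \text{ even}] \;=\; \tfrac 12 \left[(p+(1-p))^t + ((1-p)-p)^t\right] \;=\; \tfrac 12 \left[1 + (1-2p)^t\right],
\]
which with $p = 1/n$ yields $\Pr[X_t = X_0]$ as claimed.

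There is no real obstacle here; the statement is essentially a textbook computation for a symmetric two-state Markov chain, and both proof routes are short. The only minor care needed is correct bookkeeping at the base case $p_0 = 1$ and making sure that the "stay" probability $1 - 1/n$ is paired with $\Pr[X_{t-1} = X_0]$ rather than $\Pr[X_{t-1} \neq X_0]$ in the recurrence. I would write out the inductive proof because it is the shortest and makes the recurrence structure transparent for later use.
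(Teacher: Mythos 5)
Your proposal is correct, and in fact contains two valid derivations. The second route you sketch --- identifying $\{X_t = X_0\}$ with the event that $B \sim \mathrm{Bin}(t,1/n)$ is even and evaluating the binomial expansion at $\pm 1$ --- is essentially the paper's own proof: the authors compute $\Pr[X_t = X_0] - \Pr[X_t \neq X_0]$ as $\sum_{i=0}^t \binom{t}{i}(-\tfrac 1n)^i(1-\tfrac 1n)^{t-i} = (1-\tfrac 2n)^t$ and combine this with $\Pr[X_t = X_0] + \Pr[X_t \neq X_0] = 1$, which is the same averaging trick written as a difference rather than a sum. The route you say you would actually write out --- the affine recurrence $p_t = \tfrac 1n + (1-\tfrac 2n)p_{t-1}$ solved by shifting to the fixed point $1/2$ --- is genuinely different and arguably more robust: it never needs the exact distribution of the flip count, only the one-step transition probabilities, and it would generalize immediately to asymmetric flip probabilities. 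The binomial route, on the other hand, makes the parity interpretation explicit, which is the intuition the surrounding text leans on. Your bookkeeping (base case $p_0 = 1$, pairing the stay probability with $\Pr[X_{t-1} = X_0]$) is exactly right, and there is no gap in either argument.
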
 

\begin{proof}
  We have 
  \begin{align*}
   \Pr[X_t = X_0] &- \Pr[X_t \neq X_0] \\
	= & \sum_{\myatop{i=0}{2|i}}^t \binom{t}{i}\left(\frac 1n\right)^i \left(1-\frac 1n\right)^{t-i} - \sum_{\myatop{i=0}{2 \nmid i}}^t \binom{t}{i}\left(\frac 1n\right)^i \left(1-\frac 1n\right)^{t-i} \\
  = & \sum_{{i=0}}^t \binom{t}{i}\left(-\frac 1n\right)^i \left(1-\frac 1n\right)^{t-i} 
	= \left(-\frac 1n + \left(1 - \frac 1n\right)\right)^t \\
	= & \left(1-\frac 2n\right)^t.
  \end{align*}
  The claims follow from $\Pr[X_t = X_0] + \Pr[X_t \neq X_0]=1$ and elementary transformations.  
\end{proof}

We are now ready to state and prove our lower bound result. For the ease of presentation, we only cover the case that $\gamma \le \frac n4-2$. Since constant factors play no important role in the proof, it is immediately clear from the proof that this condition could be relaxed to $\gamma \le (1-\eps)n$ for any constant $\eps>0$. Further, we are optimistic that the result holds for all values of $\gamma$. However, we feel that the case of values of $\gamma$ that are linear in $n$ is not interesting enough to justify the extra effort. Note that for $\gamma = \Omega(n)$ and $\delta \ge \gamma+2$ our starting solution has a linear Hamming distance from the optimum. This can hardly be seen as re-optimization from a solution close to the optimum.

\begin{theorem}\label{thm:LOlower}
  Let $\gamma \le \frac 14 n - 2$. Let $f$ be the \leadingones function with unique optimum $\xopt = (1,\dots,1)$. For all $\delta \in [\gamma+2..n]$ there is an $\xold \in \{0,1\}^n$ with $H(\xold,\xopt) = \delta$ such that the expected time the \algo started with $\xold$ takes to find the optimum of $f$ is $\Omega(n^2)$.
\end{theorem}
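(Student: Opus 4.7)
Fix $\xold := 0^{\delta} 1^{n-\delta}$, so that $H(\xold,\xopt) = \delta$ and $\xold$ has \leadingones-fitness $0$. The plan is to combine a neutral-bit drift argument for the best-so-far solution $x^*$ with a union-bound count of ``direct-flip'' routes through the population slots, showing that with constant probability the algorithm requires at least $\Omega(n^2)$ iterations.

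The drift argument runs as follows: when $f(x^*) = k$, the bits of $x^*$ at positions $k+2, \ldots, n$ are ``free'' in the sense that mutations preserving the leading $k+1$ bits produce same-fitness offspring which, under the $\ge$-acceptance rule, can overwrite $x^*$ with an altered tail. Each such same-fitness update subjects each free bit to an independent $1/n$-Bernoulli flip (inherited from the mutation), so by Lemma~\ref{lem:neutral} the free bits drift toward $1/2$ over $\Theta(n)$ iterations. With constant probability, drift takes effect before any fitness-improving event, after which the expected fitness gain per improvement of $x^*$ is $O(1)$ and each improving event occurs with per-iteration probability $O(1/n)$ (a specific bit must be flipped and the leading ones preserved). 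Reaching fitness $n-1$ from fitness $0$ therefore requires $\Omega(n)$ improvements, for a total of $\Omega(n^2)$ iterations in expectation.

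To handle alternative routes, observe that each $x^i$ with $i \in [0..\gamma]$ satisfies $H(x^i,\xopt) \ge \delta - i \ge 2$ by the triangle inequality, so a single mutation from $x^i$ to $\xopt$ has probability at most $1/n^2$, contributing $O(1/n^2)$ per iteration when summed over the at most $\gamma+2$ stored parents. A ``lucky short'' route exists via $x^{\gamma+1}$ storing some $1^k 0 1^{n-k-1}$ (the solutions at Hamming distance $1$ from $\xopt$), from which a single bit flip yields $\xopt$. However, the expected contribution of such routes to $E[T]$ is $O(n)$, since this setup succeeds only with constant probability within the first $\Theta(n)$ iterations, and in the complementary case the drift analysis above yields the dominant $\Omega(n^2)$ term. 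Summing these contributions gives $E[T] = \Omega(n^2)$.

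The principal technical obstacle is to make the drift argument rigorous in the population-based setting. Unlike the $(1+1)$~EA, in the \algo $x^*$ can be overwritten by offspring produced from any stored parent, not only from mutations of $x^*$ itself, and the biased parent-selection rule alters the rate at which same-fitness updates to $x^*$ occur. I plan to handle this via a coupling that, for each free tail bit of $x^*$, tracks the sub-sequence of iterations capable of modifying it. Since mutation flips are independent across positions regardless of which parent is selected, the resulting process still satisfies the hypotheses of Lemma~\ref{lem:neutral}, and the constant-factor slowdown introduced by the biased parent selection does not affect the $\Omega(n^2)$ lower bound.
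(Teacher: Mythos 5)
Your high-level plan (neutral drift destroys the tail of the best-so-far individual; union-bound the remaining routes) matches the paper's, and your choice of $\xold$ is the same. However, the step that is supposed to make the drift rigorous in the population setting does not work as stated. You claim that a coupling shows the tail bits of $x^*$ ``still satisfy the hypotheses of Lemma~\ref{lem:neutral}'' regardless of which parent produced the accepted offspring. This is false: when $x^*$ is overwritten by an offspring of a stored parent $x^k$, $k\le\gamma$, the tail of the new $x^*$ is $x^k$'s tail subjected to a \emph{single} round of $1/n$-flips, not the previous $x^*$'s tail subjected to one further round. The accumulated flip count can therefore collapse, and --- in exactly the direction that is dangerous for a lower bound --- the tail is reset toward $\xold$'s values, which in all positions beyond $\delta$ equal $1$. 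So the decay toward $1/2$ on which your argument rests is not guaranteed. Your union bound does not close this hole, because it only excludes a single mutation from $x^k$ to $\xopt$; what must be excluded is a single mutation from $x^k$ to \emph{any} point of fitness at least $f(x^*)$, since any such offspring overwrites $x^*$ and breaks the coupling. The paper's proof does precisely this: it starts the clock at the first iteration $t_0$ with $f(x^*)\ge\gamma+2$, shows that the new $x^*$ then has fitness at most $n/2$ with probability at least $1-e^{-1/4}$, and observes that each $x^k$ with $k\le\gamma$ always has at least $\gamma+2-k\ge 2$ zeros among its first $\gamma+2$ positions, so that over $0.1n^2$ further iterations, with probability at least $0.9$, no offspring of any $x^k$ ever re-enters $x^*$. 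Only after this isolation is the process on $x^*$ a genuine \oea process to which a Lemma~\ref{lem:oea}-type drift argument applies.

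Two further points. First, your treatment of the $x^{\gamma+1}$ slot is not a valid lower-bound argument: ``expected contributions of routes to $E[T]$'' do not sum --- a short route can only make $T$ smaller, so you must show that with constant probability \emph{no} route produces the optimum within $cn^2$ iterations. The clean observation is that once $f(x^*)\ge\gamma+2$, every search point of that fitness has Hamming distance greater than $\gamma$ from $\xold$ (whose first $\delta\ge\gamma+2$ bits are zero), so from then on $x^*$ and $x^{\gamma+1}$ are updated by exactly the same offspring and coincide; there is no separate route to control. Second, the conclusion ``$\Omega(n)$ improvements at expected cost $\Omega(n)$ each, hence $\Omega(n^2)$ in expectation'' is not rigorous as a product of expectations; as in Lemma~\ref{lem:oea}, one should bound, with Chernoff-type arguments, the number of successes in $n^2/16$ iterations and the total (geometrically distributed) fitness gain, concluding that with constant probability the optimum is not reached, which then yields the claimed bound on the expectation.
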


\begin{proof}
  Let $\xold$ be the search point defined by $\xold_1 = \dots = \xold_\delta = 0$ and $\xold_{\delta+1} = \dots = \xold_n = 1$. Note that $H(\xold,\xopt) = \delta$.
  
  Consider the first iteration $t_0$ in which the search point stored in $x^*$ reaches a fitness of at least $\gamma+2$. Let $x$ be the parent chosen (which by assumption has a fitness of at most $\gamma+1$) and let $y$ be the offspring generated in this iteration (which will end up in $x^*$). Since the algorithm as mutation operation flips bits independently with probability $\frac 1n$ and since we know $f(y) \ge \gamma+2$, we have $y_1 = \dots = y_{\gamma+2} = 1$ and all further bits are obtained from the corresponding bit of $x$ by flipping it with probability $\frac 1n$. Let $I_0 := \{i \in [\gamma+3..\lceil \frac 12 n \rceil] \mid x_i = 0\}$ and $I_1 := \{i \in [\gamma+3..\lceil \frac 12 n \rceil] \mid x_i = 1\}$. We compute
  \begin{align*}
  \Pr[f(y) \ge \tfrac 12 n] 
  & = \prod_{i=\gamma+3}^{\lceil \frac 12 n \rceil} \Pr[y_i = 1] \\
  & = \prod_{i \in I_0} \Pr[y_i = 1]  \prod_{i \in I_1} \Pr[y_i = 1] \\
  & = (\tfrac 1n)^{|I_0|}  (1-\tfrac 1n)^{|I_1|} \le (1-\tfrac 1n)^{n/2 - \gamma - 2}\\
	&\le (1-\tfrac 1n)^{n/4} 
	 \le e^{-1/4}.
  \end{align*}
  
  Let us condition on $f(y) \le \frac 12 n$ in the following (and recall that we have this event with probability at least $1 - e^{-1/4} \ge 0.2$). We first argue that we can assume that whenever in the following $0.1 n^2$ iterations we choose an $x^k$, $k \in [0..\gamma]$, as parent, then the offspring does not replace the current value of $x^*$. Since the search point stored by the algorithm in $x^k$, $k \in [0..\gamma]$, at all times has a Hamming distance of at least $\gamma+2-k \ge 2$ from any search point with fitness $\gamma+2$ or larger, the probability that an $x^k$ is mutated to a search point with fitness at least the one of $x^*$, is at most $n^{-2}$. By a simple union bound over $0.1 n^2$ iterations, we obtain that with probability at least $0.9$, in no iteration of the time interval $I = [t_0+1..t_0+0.1n^2]$ an offspring of an $x^k$ makes it into $x^*$.
  
  Taking also this assumption, we can ignore all iterations in the time interval $I$ that use a parent different from $x^*$ as they cannot generate the optimum and cannot interfere with the process on $x^*$. In the remaining iterations in $I$, the \algo simulates a \oea using $x^*$ as population. By Lemma~\ref{lem:oea} below, the first $\tfrac 1 {16} n^2$ of these iterations (or fewer, if there are fewer such iterations in $I$) with constant probability do not create the optimum. Taking this and all assumptions taken on the way together, we see that with constant probability, the \algo within $\frac 1 {16} n^2$ iterations does not find the optimum. Consequently, the expected optimization time is $\Omega(n^2)$.

\end{proof}

We finish the proof of the main result by providing the missing ingredient that the \oea with constant probability needs a quadratic number of function evaluations to optimize \leadingones even when initialized with an arbitrary search point of fitness at most $n/2$, that is, even when the initial search point is a Hamming neighbor of the optimum. This result might be of independent interest. Again, we did not try to optimize the constants, in particular, a quadratic runtime could also be shown when the initial fitness is as large as $(1-\eps) n$ for an arbitrarily small positive constant $\eps$.

\begin{lemma}\label{lem:oea}
  Consider a run of the \oea on the \leadingones function $f$, initialized with an arbitrary search point $x^0$ such $f(x^0) \le n/2$. Let $T$ be the first iteration in which an optimal solution is generated. Then $\Pr[T \le n^2/16] \le \tfrac 1e + \exp(-\Omega(n))$.
\end{lemma}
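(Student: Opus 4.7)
The plan is to decompose the first $n^2/16$ iterations into two phases: Phase~1 consists of the first $n$ iterations and Phase~2 consists of iterations $n+1,\ldots,n^2/16$. I will bound the contribution of each phase to $\Pr[T \le n^2/16]$ separately and then add them.

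For Phase~1, the plan is a union bound based on a worst-case per-iteration success probability. In every iteration $t \le T$, the current solution $x^{t-1}$ has some number $z \ge 1$ of zero bits (since it is not yet optimal). The probability that standard bit mutation flips exactly those $z$ bits and no others, thereby producing the optimum $(1,\ldots,1)$, equals $(1/n)^z(1-1/n)^{n-z}$. As a function of $z \ge 1$ this is maximized at $z = 1$, giving at most $(1/n)(1-1/n)^{n-1} \le (1/(en))(1+o(1))$. A union bound over the first $n$ iterations yields $\Pr[T \le n] \le n \cdot (1/(en))(1+o(1)) = 1/e + o(1)$.

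For Phase~2, the plan is to argue via Lemma~\ref{lem:neutral} that with probability $1 - \exp(-\Omega(n))$ the search point $x^n$ already contains $\Omega(n)$ zero bits, which renders the per-iteration success probability negligible for the remainder of the phase. A bit at a position $i > f(x^t)+1$ is ``free'' and flips with probability $1/n$ in each accepted iteration, because an iteration is accepted iff no bit at position $\le f(x^{t-1})$ is flipped, an event independent of the flips at the higher positions. Since $f(x^0) \le n/2$, all positions in $[n/2+2,n]$ start free. Standard fitness-gain estimates for the \oea on \lo show that with high probability the fitness grows by at most $O(\log n)$ in the first $n$ iterations, so $\Omega(n)$ of these positions remain free throughout and (by Chernoff) $\Omega(n)$ iterations are accepted. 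Applying Lemma~\ref{lem:neutral} with $\tau = \Omega(n)$, each such free bit equals $0$ with probability at least $\tfrac{1}{2} - \tfrac{1}{2}(1 - 2/n)^{\Omega(n)} = \Omega(1)$; a Chernoff bound over the $\Omega(n)$ independent free bits gives $z(x^n) = \Omega(n)$ except with probability $\exp(-\Omega(n))$. Conditional on this event, each later iteration has success probability at most $(1/n)^{\Omega(n)}$, and a union bound over the $O(n^2)$ remaining iterations yields $\Pr[n < T \le n^2/16] \le \exp(-\Omega(n))$. Adding the two phases completes the proof.

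The main obstacle is the rigorous coupling between the neutral random walks of Lemma~\ref{lem:neutral} and the selection dynamics of the \oea. Two delicate points I will have to handle carefully are (i) that ``time'' in Lemma~\ref{lem:neutral} corresponds to \emph{accepted} iterations rather than wall-clock iterations, so I must bound the number of accepted iterations from below via a Chernoff argument, using that under the high-probability fitness conditioning the acceptance probability $(1-1/n)^{f(x^{t-1})}$ is $\Omega(1)$ in the first $n$ iterations; and (ii) that free bits become permanently fixed to $1$ whenever the fitness reaches their position. Under the conditioning, at most $O(\log n)$ positions get fixed, so this effect can be absorbed into the lower bound $z(x^n) = \Omega(n)$.
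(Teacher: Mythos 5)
Your Phase~1 is valid: conditioning on the history, each of the first $n$ iterations produces the optimum with probability at most $\frac 1n (1-\frac 1n)^{n-1}$, so $\Pr[T\le n]\le (1-\frac 1n)^{n-1}$ (note this is $\frac 1e+\Theta(1/n)$, not $\frac 1e+\exp(-\Omega(n))$, so your final error term is slightly weaker than stated; this is harmless for the application in Theorem~\ref{thm:LOlower}). The genuine gap is the Phase~2 step ``conditional on $z(x^n)=\Omega(n)$, each later iteration has success probability at most $(1/n)^{\Omega(n)}$.'' The zero count of the current search point is not controlled by its value at time $n$: over the following $\Theta(n^2)$ iterations every free bit is flipped $\Theta(n)$ more times, and every fitness-improving step permanently fixes a block of bits to one, so the number of zeros can shrink toward $O(1)$ as the fitness climbs. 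Whether it stays linear throughout $[n+1,\,n^2/16]$ is essentially the statement to be proved, and your proposal assumes it rather than proving it. The paper supplies exactly this missing piece: w.h.p.\ at most $\frac 18 n$ fitness-improving steps occur in $n^2/16$ iterations (each iteration improves with probability at most $\frac 1n$), each such step gains $1$ plus a number of free riders dominated by a geometric law with constant parameter $1-p$, $p\le 0.7$ (this is where Lemma~\ref{lem:neutral} is really used), so the cumulative fitness gain stays below $\frac 12 n$ w.h.p.\ and the optimum is never reached. Some version of this cumulative-gain argument is unavoidable; a per-iteration union bound on ``the mutation hits the optimum'' cannot replace it.

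A second, related flaw: the claim that w.h.p.\ the fitness grows by only $O(\log n)$ in the first $n$ iterations is not true uniformly over admissible $x^0$, and certainly not with failure probability $\exp(-\Omega(n))$. Take $x^0$ with a single zero at position $n/2+1$ and ones elsewhere: the free riders of an improving step are geometric with constant parameter only \emph{after} the tail has been randomized, and an improving step occurring at time $\tau=o(n)$ (which happens with probability $\Theta(\tau/n)$) still sees a mostly-all-ones tail; a short computation gives $\Pr[\text{gain}\ge k \text{ within the first } n \text{ iterations}]=\Theta(1/k)$, i.e., $\Theta(1/\log n)$ for $k=\Theta(\log n)$. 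This is precisely why the paper spends its constant-probability budget on the event ``no improvement during the first $n-1$ relevant iterations,'' which guarantees the tail is randomized \emph{before} the first improvement; you have already spent your $\frac 1e$ on the direct-hit union bound and cannot also condition on this. In summary, your guiding intuition (neutral bits converge to fair coins by Lemma~\ref{lem:neutral}, after which progress is slow) is the same as the paper's, but the two quantitative steps that make it a proof---randomizing the tail before the first improvement, and bounding the total fitness gain over the entire quadratic window---are both missing.
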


\begin{proof}
  Let $t_0$ be any number 
   such that among the first $t_0$ iterations of the \oea, in exactly $n-1$ iterations, called \emph{relevant iterations} in the following, an offspring $y$ is created that agrees with the parent $x$ in the first $f(x)$ bits. Note that the remaining $t_0 - (n-1)$ iterations create offspring worse than the parent, so that they have no influence on the optimization process except wasting time. In each relevant iteration, with probability exactly $1/n$ an offspring strictly better than the parent is generated (namely when the first missing bit is flipped). Hence with probability $(1-\frac 1n)^{n-1} \ge 1/e$, none of the $n-1$ relevant iterations creates a strict improvement. Let us condition on this event in the following.
  
  We now regard in detail the search point $x$ resulting from the first $t_0$ iterations. Put differently, we analyze the parent individual of iteration $t_0+1$. By our assumption, we have $f(x) = f(x^0)$. As discussed before, we can ignore the non-relevant iterations as they do not change the current individual. In each relevant iteration, the parent is replaced by the offspring, which was generated by flipping each bit greater than $f(x)+1$ independently with probability $1/n$. Consequently, for $i \in [f(x)+2..n]$, the value $x_i$ is obtained from $x^0_i$ by $n-1$ times independently flipping the bit value with probability $1/n$ (independently from the other bits). 
  
  This observation remains true (in an analogous fashion) for future generations $t \ge t_0$: Let $x$ be the search point at the end of some iteration $t \ge t_0$ and assume by induction that it is such that for all $i \ge f(x)+2$, the bit value $x_i$ of the $i$-th bit is obtained from $x^0_i$ by flipping it some number $n_t \ge n-1$ times independently with probability $1/n$. Let $y$ be the offspring generated (from $x$) in iteration $t+1$ and let $x'$ be the outcome of the selection between $x$ and $y$. Clearly, for all $i \ge f(x)+2$, the bit value $y_i$ of the $i$-th bit is obtained from $x^0_i$ by flipping it $n_t+1$ times independently with probability $\frac 1n$. If $f(y) < f(x)$, then $y$ is discarded and our claim holds for $x'$ since it holds for $x$. If $f(y) \ge f(x)$, then the first $f(y)+2$ bits of $y$ are determined by the fitness, but the remaining bits have no influence on the decision to continue with $y$. Hence for all $i \ge f(x)+2$, the value of $y_i$ is obtained from $x^0_i$ by flipping it $n_t+1$ times independently with probability $1/n$. Obviously, the same statement holds for $x'$ in this case. 
  
  In summary, we see that under the assumption taken initially (which holds with probability at least $1/e$), in all iterations following iteration $t_0$, the parent individual $x$ is such that all bits $i \in [f(x)+2..n]$ independently have the distribution of taking the initial bit value $x^0_i$ and flipping it some number $n_t \ge n-1$ of times independently with probability $\frac 1n$. By Lemma~\ref{lem:neutral}, independent of the initialization of the bit value and independently for all $i \ge f(x)+2$, we have $\Pr[x_i = 1] \le \frac 12 + \frac 12 (1 - \frac 2n)^{n_t} \le \frac 12 + \frac 12 (1 - \frac 2n)^{(n-1)} \le \frac 12 + \frac 12 (1 - \frac 2n)^{n/2} \le \frac 12 + \frac 1 {2e} =: p \le 0.7$. 
  
  We use this statement to describe the fitness gain in one iteration $t > t_0$. To have a positive fitness gain from a parent $x$, it is necessary that the first $f(x)$ bits do not flip and that bit $f(x)+1$ does flip (we call this event a success). This happens with probability $(1-\frac 1n)^{f(x)} \frac 1n \le \frac 1n$. Since in this case, each further bit of the offspring is one with probability at most $p$, regardless of the outcomes of the other bits, the fitness gain in case of a success is dominated by a geometric distribution with parameter $1-p$. 
  
  We finish the proof by showing that the total fitness gain $X$ in the time interval $[t_0+1..t_0+\frac 16 n^2]$ 
  with high probability is less than $\frac 12 n$ and thus not sufficient to reach the optimum (recall that the fitness after iteration $t_0$ still was the initial fitness $f(x^0) \le \frac 12 n$). Since the probability for a success is at most $\frac 1n$ regardless of what happened in the previous iterations, by Lemma~3 of~\cite{Doerr18evocop} the number of successes in these $\frac 1{16} n^2$ iterations is dominated by a sum of $\frac 1{16} n^2$ independent binary random variables with success probability~$\frac 1n$. Applying a common Chernoff bound, e.g., the simple multiplicative bound in Theorem~10.1 of~\cite{Doerr18bookchapter}, we see that with probability $1 - \exp(n/48)$, the number of successes is at most $\frac 18 n$. In this case, the fitness gain $X$ is dominated by a sum of $\frac 18 n$ independent geometric distributions with success probability $1-p$. Hence $E[X] = \frac 1 {8(1-p)} n$ and, using a Chernoff bound for geometric random variables like Theorem~3~(i) of~\cite{Doerr18evocop}, 
  \begin{align*}
	\Pr[X \ge \tfrac 12 n] & \le \Pr[X \ge 1.2 E[X]] \le \exp\left(- \frac{(0.2n)^2}{2\cdot \frac 18 n(1+0.2n/n)}\right)\\
	&= \exp(-\Omega(n)).
	\end{align*}
  Hence, under our initial assumption of having no fitness gain in the first $n-1$ iterations, with probability $1 - \exp(-\Omega(n))$ the expected runtime of the EA is more than $n^2/16$. Since the initial assumption was satisfied with probability $1/e$, the claim is proven.
\end{proof}

\section{Re-Optimizing Linear Functions with Modified Uniform Constraints}
\label{sec4}

The next example for which we analyze the performance of the \algo is a constrained optimization problem. Specifically, we study the maximization of a linear \emph{profit} function $p:\{0,1\}^n \to \R, x \mapsto \sum_{i=1}^n{w_i x_i}$ subject to the \emph{uniform constraint} $\sum_{i=1}^n x_i \le B$. The perturbation concerns the size of the uniform constraint: in the perturbed problem, the size bound $B$ is replaced by $B-\delta$ or $B+\delta$. 

As mentioned in the introduction, this problem has been previously studied in~\cite{shi2017reoptimization}, and constitutes one of the few constrained optimization problems for which the running time of EAs has been formally analyzed. Shi et al. analyzed the \emph{expected reoptimization time} of the \oea and of three multi-objective EAs. In our terminology, they thus assume that $\xold$ was an optimal solution for the problem (before the size bound $B$ had been changed), and bound the expected time needed by the EAs to identify an optimal solution $x^{*}$ for the perturbed problem. A main conclusion of the work by Shi et al. is that it can be beneficial to regard the constrained problem as a two-objective problem with the size (i.e., the number of ones) of the solution as one objective, and the profit values $p(x)$ as second objective. 

For the \oea, Shi et al. transform the constrained problem 
\begin{align}\label{eq:constrained}
\max p(x) = \sum_{i=1}^n w_i x_i \\
\nonumber \text{s.t.} \sum_{i=1}^n x_i \le B
\end{align}
into a pseudo-Boolean objective function 
\begin{align}\label{eq:linearfitness}
f:\{0,1\}^n \to \R, x \mapsto p(x) - C \max\big\{ \sum_{i=1}^n x_i-B, 0\big\},
\end{align}
where $C:=n |w_{\max}|+1$ for $w_{\max}:=\max\{ |w_i| \mid 1 \le i \le n\}$. With this choice, the penalty term guides the search towards the feasible region, which, once hit by the (1+1) EA, is not left by this algorithm, thanks to its elitist selection. It is proven in~\cite{shi2017reoptimization} that the \oea has an $O(n^2 \log(B w_{\max}))$ expected reoptimization time. 

We analyze the expected optimization time of the \algo on this problem formulation. The following theorem shows that it is $O(n \gamma\delta)$, provided that the perturbation estimate $\gamma$ satisfies $\gamma \ge \delta-1$. 
\begin{theorem}
\label{thm:linear}
Let $\fold:\{0,1\}^n \to \R, x \mapsto p(x) - C \max\{ \sum_{i=1}^n x_i-\Bold, 0\}$ be a function as in~\eqref{eq:linearfitness}, with linear profit function $p$. Let $\xold$ be an optimal solution for $\fold$ satisfying $\sum_{i=1}^n \xold_i\le \Bold$ (i.e., $\xold$ is feasible solution for the corresponding constrained problem~\eqref{eq:constrained}). 
Let $\delta$ be a positive integer satisfying $\delta \le \min\{\Bold, n-\Bold\}$, let $B \in \{\Bold-\delta, \Bold+\delta\}$, and let $f:\{0,1\}^n \to \R, x \mapsto p(x) - C \max\{ \sum_{i=1}^n x_i-B, 0\}$ be the perturbed fitness function that we obtain from $\fold$ by replacing the penalty term $C \max\{ \sum_{i=1}^n x_i-\Bold, 0\}$ by $C \max\{ \sum_{i=1}^n x_i-B, 0\}$. 

For all $i \in [\gamma+1]$ the expected number of fitness evaluations needed by the \algo to identify a solution of function value at least $\max \{ f(y) \mid H(y,\xold) \le i \}$ is $O(n \gamma i)$. 
\end{theorem}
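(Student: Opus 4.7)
The plan is to carry out a fitness level argument completely parallel to that of Theorem~\ref{thm:LOupper}. The preparatory observation is that since $\xold$ is optimal for $\fold$, every item selected by $\xold$ has strictly positive weight: otherwise we could delete it, preserving feasibility while weakly improving $\fold$-value and using strictly less capacity, contradicting optimality.

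For each $j\in[0..i]$ set $V_j := \max\{f(y)\mid H(y,\xold)\le j\}$. I would first build a chain $\xold = y^{(0)},y^{(1)},\ldots,y^{(i)}$ with $f(y^{(j)})=V_j$ and $H(y^{(j-1)},y^{(j)}) \le 1$. In the relaxed case $B = \Bold+\delta$, bits of $\xold$ cannot be profitably removed (they carry positive weight and the constraint is looser), so $V_j$ is attained by $\xold$ plus the $\min(j,\delta)$ largest positive-weight items outside $\xold$, which constitutes a one-addition extension of $y^{(j-1)}$. In the tightened case $B = \Bold-\delta$, the penalty $C > n w_{\max}$ makes each removal from $\xold$ strictly dominate any mixed add-remove move as long as $\xold$ remains infeasible, and once feasibility is reached no further flip strictly improves value; hence $V_j$ is attained by removing the $\min(j,|\xold|-B)$ smallest-weight items of $\xold$, a one-deletion extension. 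A short exchange argument then shows that any $x^s$ attaining value $V_s$ at Hamming distance exactly $s$ is necessarily a greedy choice of $s$ high-weight candidates up to weight ties, so there is a single-bit flip of $x^s$ yielding an offspring of value at least $V_{s+1}$. The tie-handling, together with the verification in Case~2 that pure removal beats every add-remove alternative while infeasible, is the most delicate piece and constitutes the main obstacle of the proof.

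For the fitness level argument itself I would define the state at time $t$ as the largest $s\in[0..i]$ with $f^s \ge V_s$; equivalently, $x^s$ stores a value-$V_s$ solution at distance~$s$. Each $f^s$ is monotone non-decreasing, so the state is non-decreasing; it starts at $0$ because $x^0=\xold$ gives $f^0 = V_0$. From any state $s<i$, the biased parent selection picks $x^s$ with probability at least $1/(2(\gamma+1))$, and standard bit mutation executes the required single-bit flip with probability at least $(1/n)(1-1/n)^{n-1} \ge 1/(en)$; the resulting offspring has distance $s+1\le \gamma+1$ and value at least $V_{s+1}$, so it is accepted into the slot $x^{s+1}$ and the state reaches $s+1$. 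Each transition therefore succeeds with probability at least $1/(2e(\gamma+1)n)$, and summing the expected waiting times of at most $i$ independent geometric random variables yields the target bound $2e(\gamma+1)in = O(n\gamma i)$.
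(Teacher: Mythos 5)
Your proposal is correct and follows essentially the same route as the paper: an induction over the Hamming distance $j$, where the distance-$j$ optimum is turned into the distance-$(j{+}1)$ optimum by selecting $x^j$ (probability at least $\frac{1}{2(\gamma+1)}$) and flipping the one correct bit (probability at least $\frac{1}{en}$), giving the $2e(\gamma+1)in$ bound. You are in fact more explicit than the paper about the structural claim that the chain of single-bit extensions exists (the paper simply asserts that one flips a maximal-weight $0$-bit or minimal-weight $1$-bit); only your side remark that a \emph{weak} improvement would contradict optimality is imprecise for zero-weight items, but this does not affect the argument.
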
 

\begin{proof}
Let $i \le \gamma+1$. Let $T^i$ denote the random variable that counts the number of function evaluations needed by the \algo to identify an optimal solution $x^{i,*}$ at Hamming distance $i$ from $\xold$; i.e., a solution $x^{i,*} \in \arg\max \{f(y) \mid H(y,\xold)=i\}$. 

For an inductive proof, we first bound $\E[T^{1}]$. Note that at any point in time the size of the population is at most $\gamma+2$. The solution $\xold$ (which is never removed from the population) has thus a probability of at least $1/(2(\gamma+1))$ of being selected as parent individual. Conditioning on $\xold$ being selected as parent, the probability of flipping a 0-bit of maximal weight ($B>\Bold$) or a 1-bit of minimal weight ($B<\Bold$), respectively, and no other bit is at least $(1/n)(1-\frac{1}{n})^{n-1} \ge 1/(en)$. The expected waiting time for creating a point $x^1 \in \arg\max \{f(y) \mid H(y,\xold)=1\}$ is thus at most $2e n (\gamma+1) = O(n \gamma)$.

For fixed $j \in [i-1]$ assume that $x^j$ has already been updated to a point of maximal possible fitness, i.e, $f(x^{j})= \max \{f(y) \mid y \in \{0,1\}^n \text{ with } H(y,\xold)=j\}$. By the same reasoning as above, the probability to select $x^{j}$ as parent is at least $1/(2(\gamma+1))$, and the probability to flip a 0-bit (1-bit) of maximal (minimal) weight is at least $1/(en)$, showing that $\E[T^{j+1}] \leq \E[T^{j}] + 2e n (\gamma+1) = O(n \gamma (j+1))$ by the induction hypothesis. 
\end{proof}

We note, without going into great detail, that the expected reoptimization time can strongly depend on the structure of the weights of the linear profit function. For an illustrative example, let us assume that the profit function is the \binval function, i.e., the linear function with $w_i=2^{n-i}$. Assume that $\Bold=cn$ for some $c<1$. Assume further that $\xold$ is an optimal solution for $\fold$, i.e., $\xold$ is the string with entry $1$ in positions $i \le \Bold$ and entry $0$ in positions $i>\Bold$. Finally, assume that the new size bound is $B=\Bold+1$, i.e., we have $\delta=1$. Then, regardless of $\gamma$, the expected reoptimization time is at least linear in $n$, since the $B$-th bit needs to be flipped in order to obtain the unique optimal solution for the perturbed function, which is the string having the first $B$ entries equal to one, and all others equal to zero. If, on the other hand, the linear profit function is the \onemax function (i.e., the linear functions with $w_1=w_2=\ldots=w_n=1$), the expected reoptimization time for the same perturbation of the size bound is constant. More precisely, it suffices to select $\xold$ as parent and to flip in it exactly one of the $(1-c)n$ zero-bits. Unless a new optimal solution has already been found, the probability that a solution with fitness $\fold$ is selected as parent equals $1/2$, regardless of $\gamma$. The probability to create an optimal solution for the new problem instance is thus at least $(1/2)(1-c)n/(en)=\Theta(1)$. This example can easily be extended to many other situations. A more detailed discussion of these effects can be found in~\cite{shi2017reoptimization}. 

\section{Minimum Spanning Trees}
\label{sec5}

The classical minimum spanning tree (MST) problem can be formulated as follows.
Given an edge-weighted undirected graph $G=(V,E,w)$, with $n=|V|$ nodes and $m=|E|$ edges, the goal is to find a subset $E' \subseteq E$ of minimal cost such that the graph $G(V,E')$ is connected. We denote by $w_i$ the weight of edge $e_i$, $1 \leq i \leq m$, and assume that weights are strictly positive. 
We consider the search space $\{0,1\}^m$ where $x \in \{0,1\}^m$ gives a selection of edges, i.e., edge $e_i$ is selected if and only if $x_i=1$.

We consider the fitness function $f(x) = (c(x), w(x))$ where $c(x)$ denotes the number of connected components of the graph given by $x$ and $w(x)=\sum_{i=1}^m w_i x_i$ is the weight of the chosen edges. The fitness function should be minimized with respect to lexicographic order which is equivalent to assigning to each additional components a large penaly. This standard formulation of the MST problem has already been investigated in~\cite{DBLP:journals/tcs/NeumannW07} and a multi-objective formulation trading off $c(x)$ and $w(x)$ against each other has been considered in~\cite{NeumannW06}. 

\subsection{Additional Edges}
\label{sec:MSTadd}

We first consider the case where $\delta$ edges $e \not \in E $ are added to the graph $G=(V,E)$. Let $x^*=(x_1^*, \ldots, x_m^*)$ be a solution representing a minimum spanning tree before a change has occurred.

We assume that the number (but not necessarily the endpoints, nor the weights) of the additional edges is known. The size of the search space is thus increased by $\delta$. The new edges are labeled $e_{m+1}, \ldots e_{m+\delta}$. The \algo uses as input for the reoptimization the search point $\xold=(x^*_1, \ldots x^*_m,x_{m+1}, \ldots x_{m+\delta})$ with $x_{m+1}= \ldots= x_{m+\delta}=1$. 

Since the number of edges in an MST of a graph with $n$ vertices is $n-1$, we know in this setting---unlike the cases considered in the previous sections---that the actual perturbation is $\delta$. We nevertheless assume that a general bound $\gamma \ge \delta$ is used in the \algo, since $\xold$ may have been communicated without the size bound $\delta$. 

We start our performance analysis of \algo with some structural observations about the minimum spanning tree in the extended graph.
\begin{lemma}
\label{lem:opttree}
Let $T$ be a minimum spanning tree for a given graph $G=(V,E,w)$ and let $G_{\delta}$ be obtained from $G$ by adding a set $E_{\delta}$ consisting of $\delta$ edges that satisfy $E \cap E_{\delta} = \emptyset$. Then there exists a minimum spanning tree $T^*$ of $G_{\delta}$ that can be obtained by removing 
$\delta$ edges from $T \cup E_{\delta}$.
\end{lemma}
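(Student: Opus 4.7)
The plan is to establish the claim by a standard exchange argument on edges. Since $E \cap E_\delta = \emptyset$ and $|T|=n-1$, we have $|T \cup E_\delta| = n-1+\delta$, so removing $\delta$ edges from $T \cup E_\delta$ to obtain an MST of $G_\delta$ is equivalent to exhibiting some MST of $G_\delta$ that is contained in $T \cup E_\delta$.

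Among all MSTs of $G_\delta$, let $T^{**}$ be one that minimizes $|T^{**} \setminus (T \cup E_\delta)|$. The goal is to show this minimum is $0$. Suppose for contradiction there is an edge $e \in T^{**} \setminus (T \cup E_\delta)$; then $e \in E \setminus T$, and adding $e$ to $T$ closes a unique cycle $C_e \subseteq T \cup \{e\}$. Because $T$ is an MST of $G$, $e$ must be a maximum-weight edge of $C_e$: otherwise some $e'' \in C_e \setminus \{e\}$ with $w(e'') > w(e)$ could be exchanged with $e$ in $T$ to produce a spanning tree of $G$ of strictly smaller weight, contradicting the minimality of $T$.

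Next, I would perform a swap inside $T^{**}$. Deleting $e$ from $T^{**}$ splits $V$ into two components $S$ and $V\setminus S$; the edge $e$ crosses this cut by construction, and since $C_e$ is a cycle through $e$ and every cycle crosses every cut an even number of times, at least one further edge $e' \in C_e \setminus \{e\}$ also crosses $(S, V \setminus S)$. Note that $e' \in T$, hence $e' \in T \cup E_\delta$, and that $e' \notin T^{**}$ (no edge of $T^{**} \setminus \{e\}$ crosses that cut). Consequently, $T' := (T^{**} \setminus \{e\}) \cup \{e'\}$ is a spanning tree of $G_\delta$ with $w(T') = w(T^{**}) - w(e) + w(e') \le w(T^{**})$, so $T'$ is an MST with one fewer edge outside $T \cup E_\delta$ than $T^{**}$, contradicting the choice of $T^{**}$.

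The main obstacle is justifying the two ingredients that drive the exchange: that $e$ is a maximum-weight edge of $C_e$ (which crucially uses the minimality of $T$ in $G$, not in the enlarged $G_\delta$) and that the chosen $e'$ both lies in $T$ and crosses the cut induced by removing $e$ from $T^{**}$ (the parity-of-cycle-crossings argument). Both are classical MST facts, and once they are in place the counting $|T \cup E_\delta| - |T^{**}| = \delta$ yields the lemma.
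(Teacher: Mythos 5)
Your proof is correct, but it runs in the opposite direction from the paper's. The paper takes $T^*$ to be a minimum spanning tree of the restricted edge set $T \cup E_{\delta}$ (which trivially has the right cardinality, so exactly $\delta$ edges are deleted) and then argues that $T^*$ is already a minimum spanning tree of all of $G_{\delta}$, via the cut property: for the cut induced by removing any edge $e$ of $T$, no edge of $E \setminus T$ crossing that cut is lighter than $e$, so edges outside $T \cup E_{\delta}$ cannot help. You instead start from a global optimum of $G_{\delta}$, choose it to minimize the number of edges outside $T \cup E_{\delta}$, and use a cycle--cut exchange (with the cycle property of $T$ in $G$ giving $w(e') \le w(e)$, and the parity argument giving an $e' \in T \setminus T^{**}$ crossing the cut) to drive that number to zero. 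Both arguments rest on the same classical MST exchange facts; yours is the more carefully formalized of the two — the paper's ``hence only edges of $E_{\delta}$ can result in a spanning tree of smaller weight'' compresses exactly the exchange step you spell out — while the paper's construction has the minor advantage of directly producing the tree inside $T \cup E_{\delta}$ rather than obtaining it by an extremal-choice contradiction.
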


\begin{proof}
Let $T^*$ be a minimum spanning tree of $T \cup E_{\delta}$. Obviously $T^*$ can be obtained from $T \cup E_{\delta}$ by deleting $\delta$ edges. We show that $T^*$ is also a minimum spanning tree of $G_{\delta}$. As $T$ is a minimum spanning tree of $G$, for any cut that is obtained by removing an edge $e$ of $T$, there is no edge in $E \setminus T$ that has a smaller weight than $e$ and connects the two components (as T is a minimum spanning tree of $G$). Hence, only edges of $E_{\delta}$ can result in a spanning tree of smaller weight than $T$ and a minimum spanning tree of $T \cup E_{\delta}$ is also a minimum spanning tree of $G_{\delta}$.
\end{proof}

According to Lemma~\ref{lem:opttree}, we can obtain a minimum spanning tree of $G_{\delta}$ by deleting $\delta$ edges of $T \cup E_{\delta}$ in decreasing order of their weights that do not disconnect the graph.

\begin{lemma}\label{lem:hami}
Let $T$ be a minimum spanning tree for a given graph $G=(V,E,w)$ and let $G_{\delta}$ be obtained from $G$ by adding a set $E_{\delta}$ consisting of $\delta$ edges not previously present in $E$. An optimal solution $x^{i}$, $1 \leq i \leq \delta$, representing a connected graph of Hamming distance exactly $i$ to $\xold$ is obtained from $T \cup E_{\delta}$ by sequentially removing exactly $i$ edges of the largest weight such that the graph does not get disconnected. 

Furthermore, an optimal solution $x^{i+1}$ at Hamming distance $i+1$ from $x^{\old}$ can be obtained from $x^i$ by removing the largest edge whose removal does not make the graph disconnected.
\end{lemma}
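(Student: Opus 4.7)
The plan is to split the proof into a reduction step and a matroid-greedy step. Set $H := T \cup E_{\delta}$, so that $\xold$ is the indicator vector of $H$ and, for any connected spanning subgraph $S$ of the extended graph $G_{\delta}$, the Hamming distance from $\xold$ equals $|S \triangle H|$. First I would show that any optimum can be taken inside $H$ via an exchange argument based on fundamental cycles; then I would apply matroid greedy on the cocycle matroid of $H$, which in graph terms is exactly the reverse-delete rule described in the lemma.

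For the reduction, suppose $S$ is a connected spanning subgraph of $G_{\delta}$ containing some $e \in S \cap (E \setminus T)$. The fundamental cycle $C_e \subseteq T + e$ has $e$ as a heaviest edge, since otherwise $T - f + e$ would beat $T$ as an MST of $G$ for some $f \in C_e \cap T$; hence $w(f) \le w(e)$ for all $f \in C_e$. If $e$ is not a bridge of $S$, then $S - e$ is still connected, strictly lighter, and at Hamming distance $|S \triangle H| - 1$ (since $e \notin H$). If $e$ is a bridge with associated cut $(V_1, V_2)$, the path $C_e \setminus \{e\} \subseteq T$ between the endpoints of $e$ must cross this cut, so some edge $f \in C_e \cap T$ has one endpoint in each $V_j$; any such $f$ satisfies $f \notin S$, for otherwise $S - e$ would still be connected. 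Then $S - e + f$ is connected, has weight $w(S) - w(e) + w(f) \le w(S)$, and lies at Hamming distance $|S \triangle H| - 2$ (as $e \notin H$ and $f \in T \subseteq H$). Iterating these swaps eliminates every edge of $E \setminus T$ from $S$ and produces $S' \subseteq H$ with $w(S') \le w(S)$ and Hamming distance at most $i$.

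For the matroid step, consider the cocycle matroid $M$ on $H$ in which $X \subseteq H$ is independent iff $H \setminus X$ is connected. A minimum-weight connected spanning subgraph of $H$ with $|H| - i$ edges corresponds to a maximum-weight independent set of size $i$ in $M$, and matroid greedy---pick the heaviest element whose addition keeps independence---yields such a set. This greedy procedure is precisely reverse-delete on $H$, so combined with the reduction it proves the first assertion: for any competitor $S$ at Hamming distance $i$, the reduction produces $S' \subseteq H$ at some distance $i' \le i$, and $w(S) \ge w(S') \ge w(x^{i'}_{\mathrm{greedy}}) \ge w(x^i_{\mathrm{greedy}})$, the last inequality because extending greedy removes further edges of strictly positive weight. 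The second assertion is the extension property of matroid greedy: from a maximum-weight independent set of size $i$, adding the heaviest element whose inclusion preserves independence yields a maximum-weight independent set of size $i+1$; translated back to graphs, removing the heaviest edge of $x^i$ whose removal does not disconnect the graph produces an optimal $x^{i+1}$. The main obstacle will be the bridge case of the reduction, where one has to pinpoint the path edge $f \in C_e \cap T$ that crosses the cut defined by $e$ and verify that it is absent from $S$; everything else is standard matroid greedy and Hamming-distance bookkeeping.
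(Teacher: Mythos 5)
Your proof is correct, but it takes a genuinely different route from the paper's. The paper first shows (Lemma~\ref{lem:opttree}, via the cut property of $T$) that some minimum spanning tree of $G_{\delta}$ lies inside $T \cup E_{\delta}$, then fixes an optimal removal set $E^{*}$ of $\delta$ edges, defines $x^i$ by deleting the $i$ heaviest edges of $E^{*}$, and eliminates competitors $y^i$ that insert an edge of $E\setminus T$ through the counting bound $w(y^i)\ge w(\xold)-w(E_{i-1})>w(x^i)$. You instead reduce an arbitrary competitor to a subset of $T\cup E_{\delta}$ by a fundamental-cycle exchange that controls weight and Hamming distance simultaneously, and then run matroid greedy on the cographic matroid of $T\cup E_{\delta}$. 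Your reduction is the more careful of the two: the paper's counting step tacitly treats the removed edges as a set whose deletion leaves $T\cup E_{\delta}$ connected, which can fail when the inserted edges of $E\setminus T$ do the reconnecting, and your bridge-case swap $S-e+f$ is exactly what handles that situation. The explicit matroid view also turns the optimality of reverse-delete and the extension step behind the second assertion (which, as you note, works from any maximum-weight independent set of size $i$, not only the greedy one) into standard facts rather than implicit claims; what the paper's route buys in exchange is brevity and direct reuse of its Lemma~\ref{lem:opttree}. One detail to make explicit when writing this up: the cographic matroid of $T\cup E_{\delta}$ has rank $\delta$, so the greedy chain $E_1\subseteq\dots\subseteq E_i$ indeed exists for every $i\le\delta$, which your final chain of inequalities $w(S)\ge w(S')\ge w(x^{i'})\ge w(x^{i})$ silently requires.
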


\begin{proof}
Consider the graph $G_{\delta}$. According to Lemma~\ref{lem:opttree} we can obtain a minimum spanning tree $T^*$ for $G_{\delta}$ by removing those $\delta$ edges from $T \cup E_{\delta}$ that are of largest weight and whose removal does not make the graph disconnected. It suffices to remove these edges sequentially, in decreasing order of weight. 

Let $E^*=\{e_1, \ldots, e_{\delta}\} \subseteq T \cup E_{\delta}$, with $w(e_1) \geq \ldots \geq w(e_{\delta})$ be such a set of $\delta$ edges whose removal yields a minimum spanning tree. For $1 \le i \le \delta$ set $E_i:=\{e_1, \ldots, e_i\}$; i.e., $E_i$ is a subset of $E^*$ consisting of a set of edges having the $i$ largest weights (ties are broken arbitrarily). Let $x^i$ be the string obtained from $\xold$ by flipping those bits that correspond to the edges $E_i$. The Hamming distance $H(\xold,x^i)$ of $x^i$ to $\xold$ is $i$, and the weight of $x^i$ equals $w(x^i) = w(x^{\old}) - w(E_i)$. 

We show that $x^i$ is an optimal solution at Hamming distance $i$ to $\xold$, in the sense that $c(x^i)=1$ and $w(x^i)=\min\{w(y) \mid H(\xold,y)=i, c(y)=1 \}$. 
Let $y^i$ be a solution representing a connected graph at Hamming distance $i$ to $\xold$ having the smallest weight among all such solutions. 
Since $c(x^i)=1=c(y^i)$, we only need to show that $w(x^i) = w(y^i)$. If $y^i$ is obtained by removing exactly $i$ edges from $\xold$ then we have $w(x^i)=w(y^i)$ due to the construction of $x^i$. If $y^i$ is not obtained by removing exactly $i$ edges, then an additional edge $e \not \in E_{\delta}$ has to be inserted. In this case, by the restriction that $H(y^i,\xold)=i$, there can be at most $i-1$ edges that have been removed from $\xold$, and these edges have to be such that the resulting graph is not disconnected. This implies $w(y^i) \geq w(x^{\old})- w(E_{i-1})>w(x^i)$, contradicting the choice of $y^i$. We therefore obtain that $y^i$ is obtained from $\xold$ by removing $i$ edges and therefore we have $w(x^i)=w(y^i)$.  

Hence, the optimal solution $x^i$ is a solution obtained from $\xold$ by removing a set $E_i$ of largest weight that does not make the graph disconnected. Having reached an optimal solution $x^i$ which is a subset of the edges chosen by $\xold$, an optimal solution $x^{i+1}$ is obtained by flipping the $1$-bit corresponding to the largest edge whose removal does not make the graph disconnected.
\end{proof}

\begin{theorem}
Let $\gamma \geq \delta$. Then the expected time until \algo has computed a minimum spanning tree after the addition of $\delta$ edges to the graph $G=(V,E,w)$ when starting with $x^{\old}$ is $O(\gamma \delta n)$.
\end{theorem}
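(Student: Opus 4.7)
The plan is to use a level-based argument along the chain of $\delta+1$ optimal connected solutions, exploiting the explicit one-bit-flip construction provided by Lemma~\ref{lem:hami}. Throughout I write $n$ for the bit-string length processed by the \algo, i.e., $m+\delta$ in the present setting. For $i \in [0..\delta]$, I say that the algorithm is \emph{at level $i$} when, for every $j \in [0..i]$, the slot $x^j$ stores a connected subgraph $y$ with $H(y,\xold)=j$ of minimum weight among all connected subgraphs at that Hamming distance from $\xold$. At initialization the algorithm is already at level $0$, since $x^0 = \xold$ is the only point at distance $0$ from $\xold$ and is connected. By Lemmas~\ref{lem:opttree} and~\ref{lem:hami}, reaching level $\delta$ is equivalent to $x^\delta$ encoding a minimum spanning tree of $G_\delta$, so it suffices to bound the expected time to reach level $\delta$.

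The first step is to observe level monotonicity: once level $i$ is reached, it is never lost. By the minimization version of the update rule in Algorithm~\ref{alg:algo}, the slot $x^j$ is only overwritten by an offspring of Hamming distance exactly $j$ to $\xold$ whose fitness is at most $f^j$; since we use the lexicographic fitness $(c(x), w(x))$ and $f^j$ equals the optimal value $(1, w^*_j)$ as soon as level $j$ is attained, any replacement must also have fitness $(1, w^*_j)$ and is therefore again an optimal connected solution at distance $j$.

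The second and main step is to upper-bound the expected waiting time from level $i$ to level $i+1$ for $i<\delta$. By Lemma~\ref{lem:hami}, for every optimal connected solution $y$ at Hamming distance $i$ from $\xold$ there exists an edge $e$ of $y$ whose removal produces an optimal connected solution at Hamming distance $i+1$. In one iteration, the biased parent selection picks $x^i$ with probability at least $1/(2(\gamma+1))$, and standard bit mutation then flips exactly the bit representing $e$ (and no other bit) with probability at least $(1/n)(1-1/n)^{n-1} \ge 1/(en)$. The offspring so produced sits at Hamming distance $i+1$ from $\xold$ and attains the optimal fitness at that distance, hence is accepted into $x^{i+1}$ and moves the algorithm to level $i+1$. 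The expected waiting time is therefore at most $2e(\gamma+1)n$.

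Summing over $i = 0, 1, \ldots, \delta-1$ yields a total expected optimization time of $O(\gamma\delta n)$, as claimed. The main subtlety I anticipate is that Lemma~\ref{lem:hami} must be applied to whichever optimal solution currently sits in $x^i$, not only to the specific solution constructed in its proof; the ``right'' edge to flip can depend on the stored solution. A matroid-exchange argument on the set of edges of $T\cup E_\delta$ whose removal leaves $G_\delta$ connected shows that every maximum-weight set of $i$ such removable edges extends by one further removable edge to a maximum-weight set of size $i+1$, which is precisely what the level transition requires.
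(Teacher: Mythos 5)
Your proof is correct and follows essentially the same route as the paper: an inductive, fitness-level argument along the chain of optimal connected solutions at Hamming distances $0,1,\dots,\delta$ from $\xold$, with each level transition costing an expected $2e(\gamma+1)n$ iterations via selecting $x^i$ with probability at least $1/(2(\gamma+1))$ and performing the single-bit flip guaranteed by Lemma~\ref{lem:hami}. Your closing remark --- that the exchange property must be applied to whichever optimal solution actually occupies the slot $x^i$, not only the specific one constructed in the lemma's proof --- addresses a tie-breaking subtlety that the paper's proof passes over silently, and the matroid-exchange justification you sketch is a worthwhile addition.
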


\begin{proof}
Let $G_{\delta}$ be the graph obtained form $G$ by the addition of the $\delta$ edges.

As in the previous two sections we perform an inductive proof and show that for each $1 \le i \le \delta$ the expected number of iterations needed by the \algo to obtain an optimal solution $x^i$ at Hamming distance $i$ from $\xold$ is $O(\gamma i n)$. Let $x^0=\xold$ and let $i \in [0..\delta-1]$ be such that the \algo has found an optimal solution at Hamming distance $i$ from $\xold$. By Lemma~\ref{lem:hami} an optimal solution $x^{i+1}$ at Hamming distance $i+1$ from $\xold$ is obtained from $x^i$ by flipping in it exactly one of those $1$-bits that correspond to an edge of largest weight and which is such that its removal does not make the graph disconnected, and flipping no other bit in $x^i$. The solution $x^i$ is selected with probability at least $1/(2(\gamma+1))$ and the corresponding mutation happens with probability at least $1/(en)$, so that the expected time needed to create from $x^i$ an optimal point $x^{i+1}$ at Hamming distance $i+1$ is $O(\gamma n)$. Since the value of $i$ has to be increased at most $\delta$ times, a minimum spanning tree $T^*$ of $G_{\delta}$ is obtained from $\xold$ in expected time $O(\gamma \delta n)$.
\end{proof}

\subsection{Removal of Edges}
\label{sec:MSTremove}

We now consider the case where a set of $\delta$ edges $E_{\delta} \subset E$ is removed from the graph $G=(V,E,w)$ such that 
a still connected graph
$G_{\new}=(V,E\setminus E_{\delta},w_{\new})$ is obtained (where $w_{\new}$ denotes the restriction of $w$ to the edges in $E\setminus E_{\delta}$). Let $x^*=(x^*_1, \ldots, x^*_m)$ be a solution representing a minimum spanning tree of $G$. We remove the bits corresponding to the removed edges in order to obtain the solution $x^{\old}$ that we are using for the initialization of the reoptimization process. Without loss of generality and to ease the presentation, we assume that the last $\delta$ bits are removed, which implies $x^{\old} = (x^*_1, \ldots, x^*_{m-\delta})$. 

\begin{theorem}
Let $\gamma \geq \delta$. Then the expected time until the \algo has computed a minimum spanning tree after the removal of $\delta$ edges from $G=(V,E,w)$ when starting with $x^{\old}$ is $O(\gamma \delta n)$.
\end{theorem}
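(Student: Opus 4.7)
The plan is to mirror the strategy from Section~\ref{sec:MSTadd}: first establish a structural lemma analogous to Lemma~\ref{lem:hami} that produces, for each feasible Hamming distance $i$, an ``intermediate optimal'' solution $y^{i}$ reachable from $y^{i-1}$ by flipping a single bit, and then plug this into the same fitness-level/parent-selection induction that handled the additions case.

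For the structural setup, let $T$ be the MST of $G$ encoded by $x^{*}$ and set $k := |T \cap E_{\delta}|$, so $0 \le k \le \delta$ and $\xold$ encodes the forest $T \setminus E_{\delta}$ in $G_{\new}$ with exactly $k+1$ connected components. If $k = 0$ then $\xold$ is already an MST of $G_{\new}$ (a lighter spanning tree of $G_{\new}$ would also be a spanning tree of $G$ lighter than $T$) and the claim is immediate, so I assume $k \ge 1$. A standard MST exchange/cut-property argument then shows that $T \setminus E_{\delta}$ is contained in some MST $T^{*} = (T \setminus E_{\delta}) \cup \{f_{1}, \dots, f_{k}\}$ of $G_{\new}$, where the $f_{j}$'s can be chosen in Kruskal order, i.e., $f_{j}$ is the lightest edge of $G_{\new}$ that bridges two components of $\xold \cup \{f_{1}, \dots, f_{j-1}\}$.

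Building on this, I would prove that for every $i \in [0..k]$ the solution $y^{i} := \xold \cup \{f_{1}, \dots, f_{i}\}$ is lex-optimal at Hamming distance exactly $i$ from $\xold$ with respect to the fitness $(c(\cdot), w(\cdot))$. The key inequality is a component count: any $y$ obtained from $\xold$ by adding $a$ edges and removing $r$ edges with $a + r = i$ satisfies $c(y) \ge (k+1) - a + r = (k+1) - i + 2r$, so minimizing $c$ at Hamming distance $i$ forces $r = 0$ and every added edge to bridge two components; minimizing $w$ subject to this is then a standard matroid problem whose greedy solution is exactly $\{f_{1}, \dots, f_{i}\}$. In particular $y^{i+1}$ is obtained from $y^{i}$ by flipping a single $0$-bit. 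The runtime bound then follows exactly as in Section~\ref{sec:MSTadd}: once some optimal $y^{i}$ is stored as $x^{i}$ by the \algo, $x^{i}$ is chosen as parent with probability at least $1/(2(\gamma+1))$, the bit corresponding to $f_{i+1}$ is flipped while no other bit flips with probability at least $1/(en)$, giving expected waiting time $O(\gamma n)$ per increment, and at most $k \le \delta$ increments are needed for a total of $O(\gamma \delta n)$.

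The main obstacle is the structural lemma, and in particular the component-counting step that rules out solutions mixing edge additions with edge removals and makes the greedy Kruskal extension of $\xold$ genuinely lex-optimal at every intermediate Hamming distance rather than merely at distance $k$. The cut-property extension argument and the matroid greedy step are standard; packaging them into a Hamming-distance-indexed sequence of lex-optimal solutions that the \algo can climb one bit at a time is where bookkeeping is needed, but no new ideas beyond those already used in the proof of Lemma~\ref{lem:hami} should be required.
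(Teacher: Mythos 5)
Your proposal is correct and follows essentially the same route as the paper: with $k$ playing the role of the paper's $\delta'$, both arguments climb a sequence of minimum spanning forests of $G_{\new}$ containing $\xold$, each obtained from the previous one by adding the lightest edge that does not create a cycle, with waiting time $O(\gamma n)$ per step via the $1/(2(\gamma+1))$ parent-selection probability and at most $k \le \delta$ steps. Your component-counting inequality $c(y) \ge (k+1) - i + 2r$ explicitly justifies that these forests are lex-optimal at their respective Hamming distances, a point the paper's proof leaves implicit.
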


\begin{proof}
Let $\delta' \leq \delta$ be the number of edges that have been removed from the minimum spanning tree represented by $x^*$ for $G=(V,E,w)$.
The solution $x^{\old}$ is a minimum spanning forest of $G_{\new}=(V,E\setminus E_{\delta},w_{\new})$ consisting of $\delta'+1$ 
connected components.
For our analysis, we always pick the solution $x^i$ such that for $x^i$ and all $x^j$, $0 \leq j \leq i< \delta'$, a minimum spanning forest with $\delta'-j+1$ 
connected components for $G_{\new}$ has already been obtained. 
Solution $x^i$ is chosen as a parent for mutation with probability at least $1/(2(\gamma+1))$. 
It is well known (and used, for example, in Prim's algorithm) that flipping the bit corresponding to an edge of minimal weight that does not create a cycle produces a solution $x^{i+1}$ that is a minimum spanning forest with $\delta'-i$ 
connected components at Hamming distance $i+1$ to $x^{\old}$. There are at most $\delta'$ steps in which the value of $i$ has to be increased such that a minimum spanning tree for $G_{\delta}$ which has Hamming distance $\delta' \leq \delta$ to $x^{\old}$
is obtained. This implies that a minimum spanning tree for $G_{\delta}$ is obtained after an expected number of $O(\gamma \delta' n) = O(\gamma \delta n)$ steps.
\end{proof}


\section{Conclusions}
The task of re-optimizing a previously encountered problem plays a crucial role in real-world applications. We contribute to the theoretical understanding and design of evolutionary algorithms for such dynamically changing problems and introduced a diversity-based approach which searches for good solutions around a good solution prior to the perturbation. This allows the algorithm to remember good components of the given problem. Our theoretical results show that this leads to highly effective evolutionary algorithms as it prevents recomputation of previously obtained knowledge about the given problem.

\subsection*{Acknowledgments}
This work has been supported by the Australian Research Council through grants DP160102401 and DP190103894, by COST action CA15140 (`ImAppNIO'), and by a public grant as part of the Investissement d'avenir project, reference ANR-11-LABX-0056-LMH, LabEx LMH, in a joint call with Gaspard Monge Program for optimization, operations research and their interactions with data sciences. Parts of this research has been conducted during a research visit of Frank Neumann as invited professor at Sorbonne University, with financial support from the LIP6 laboratory. 

\newcommand{\etalchar}[1]{$^{#1}$}

\end{document}